\title{\LARGE \bf
Optimal and Bounded-Suboptimal Multi-Goal Task Assignment \\and Path Finding
}
\author{Xinyi Zhong$^{1}$, Jiaoyang Li$^{2}$, Sven Koenig$^{2}$, Hang Ma$^{1}$
\thanks{$^{1}$School of Computing Science, Simon Fraser University, Canada
        {\tt\small \{xinyi\_zhong,hangma\}@sfu.ca}}%
\thanks{$^{2}$Department of Computer Science, University of Southern California, USA
        {\tt\small \{jiaoyanl,skoenig\}@usc.edu}}%
}
\begin{document}

\maketitle
\thispagestyle{empty}
\pagestyle{empty}

\begin{abstract}
We formalize and study the multi-goal task assignment and path finding (MG-TAPF) problem from theoretical and algorithmic perspectives. The MG-TAPF problem is to compute an assignment of tasks to agents, where each task consists of a sequence of goal locations, and collision-free paths for the agents that visit all goal locations of their assigned tasks in sequence. Theoretically, we prove that the MG-TAPF problem is NP-hard to solve optimally. We present algorithms that build upon algorithmic techniques for the multi-agent path finding problem and solve the MG-TAPF problem optimally and bounded-suboptimally. We experimentally compare these algorithms on a variety of different benchmark domains. 

\end{abstract}


\section{Introduction}

In recent years, the multi-agent path finding (MAPF) problem \cite{SternSOCS19} has been well-studied in artificial intelligence and robotics due to its many applications, such as warehouse automation \cite{kiva}, autonomous traffic management \cite{dresner2008multiagent}, autonomous aircraft towing \cite{airporttug16}, and video games \cite{LiAAMAS20}. In the MAPF problem, each agent must move from its current location to its pre-assigned goal location while avoiding collisions with other agents in a known environment. 

The MAPF problem has recently been extended to many real-world settings \cite{GTAPF, LiuAAMAS19} where goal locations are not pre-assigned to agents. For example, in a modern automated warehouse, each warehouse robot needs to pick up an inventory pod from its storage location, deliver it to the inventory stations that request one or more products stored in it, and take it back to its storage location. Such automated warehouse systems often employ a task planner to determine a set of tasks consisting of a sequence of goal locations. The problem is then to assign these tasks to the warehouse robots and plan paths for them. 

We thus formalize and study the multi-goal task assignment and path finding (MG-TAPF) problem, where as many tasks as agents are given and each task consists of a sequence of goal locations. The MG-TAPF problem is to compute a one-to-one assignment of tasks to agents and plan collision-free paths for the agents from their current locations to the goal locations of their assigned tasks such that each agent visits the goal locations in the order specified by its assigned task and the flowtime (the sum of the finish times of all agents in the last goal locations of their assigned tasks) is minimized.

\subsection{Background and Related Work}

Many problems that are related to our problem have been proposed and studied in recent years.

\noindent \textbf{MAPF:}
The MAPF problem is NP-hard to solve optimally for flowtime (the sum of the finish times of all agents in the last goal locations of their assigned tasks) minimization \cite{YuLav13AAAI} and even NP-hard to approximate within any constant factor less than 4/3 for makespan (the maximum of the finish times of all agents in their pre-assigned goal locations) minimization \cite{surynek2010optimization, MaAAAI16}. MAPF algorithms include reductions to other well-studied optimization problems \cite{YuLav13ICRA, Surynek15, erdem2013general} and specialized rule-based, search-based and hybrid algorithms \cite{PushAndSwap, WangB11, PushAndRotate, DBLP:journals/ai/SharonSGF13, Wagner15, DBLP:journals/ai/SharonSFS15, LamBHS19}. In particular, Conflict-Based Search (CBS) \cite{DBLP:journals/ai/SharonSFS15} is a popular two-level optimal MAPF algorithm that computes time-optimal paths for individual agents on the low level and performs a best-first tree search to resolve collisions among the paths on the high level. Recent research has developed several improved versions of CBS \cite{ICBS, LiIJCAI19, GangeHS19, DBLP:conf/ijcai/BoyarskiFHS00K20}. An extended version of CBS has been developed for the case where each agent has multiple pre-assigned goal locations \cite{SurynekAAAI21}. However, the MAPF problem is insufficient for modelling real-world settings where goal locations are not pre-assigned to agents.

\noindent \textbf{TAPF:}
In the TAPF problem \cite{MaAAMAS16}, agents are partitioned into teams and each team is given the same number of single-goal tasks as the number of agents. The objective is to assign each agent of a team exactly one task of the team and plan collision-free paths for the agents. A special case of the TAPF problem occurs when only one team exists, known as the anonymous MAPF problem. We use TAPF to denote this special case. TAPF can be solved optimally in polynomial time for makespan minimization\cite{YuLav13STAR}. For flowtime minimization, its complexity remains unclear, but it can be solved with an extension of CBS, called CBS with Task Assignment (CBS-TA)\cite{honig2018conflict}. CBS-TA searches in the space of all possible assignments of tasks to agents using a best-first search in a forest that consists of regular CBS search trees where each tree corresponds to a different assignment. A similar version of CBS has been developed for the case where each agent can execute more than one task, but it scales to only four agents and four tasks \cite{DBLP:conf/iros/HenkelAT19}.

\noindent \textbf{Lifelong TAPF:}
Recent research has considered the online multi-agent pickup and delivery (MAPD) problem, where tasks appear at unknown times and each task consists of a sequence of two goal locations \cite{MaAAMAS17}. The offline MAPD problem has also been studied for the case where all tasks are known in advance \cite{LiuAAMAS19}. The multi-goal TAPF (MG-TAPF) problem is at the crux of two lifelong problems since state-of-the-art MAPD algorithms \cite{LiuAAMAS19, grenouilleau2019multi} essentially decompose a lifelong instance into a sequence of MG-TAPF instances. However, the MG-TAPF instances are not solved optimally in either case. Grenouilleau et al. \cite{grenouilleau2019multi} have proposed the Multi-Label Space-Time A* algorithm (MLA*), that computes a time-optimal path for an agent that visits two goal locations and solves task assignment and path finding independently.  A similar lifelong problem has also been considered where each task has a temporal constraint\cite{GTAPF}.

\noindent \textbf{PC-TAPF:}
The precedence-constraint TAPF (PC-TAPF) problem generalizes the anonymous MAPF problem \cite{brown2020optimal}. It involves sequential task assignment and incorporates temporal precedence constraints between tasks, where for example, tasks A and B must be completed before task C begins. The PC-TAPF problem is NP-hard to solve for makespan minimization. A four-level hierarchical algorithm is proposed to solve it optimally. However, the lower level path planner (third and fourth level) is incomplete. So, there is no completeness guarantee for the whole algorithm.

\subsection{Contributions}

In this paper, we study the general version of the TAPF problem, where each task consists of a sequence of multiple ordered goal locations, called MG-TAPF problem.

We formalize the MG-TAPF problem as an extension of the TAPF problem that aims to minimize the flowtime. We prove that it is NP-hard to solve optimally. 
The proof is based on a reduction \cite{MaAAAI16} from a specialized version of the Boolean satisfiability problem \cite{cat1984} to the MG-TAPF problem.

We present an Conflict-Based Search with Task Assignment with Multi-Label A* algorithm (CBS-TA-MLA) that solves the MG-TAPF problem optimally for flowtime minimization. CBS-TA-MLA is a hierarchical algorithm. It uses a best-first search algorithm CBS-TA \cite{honig2018conflict} on the high level to search over all possible assignments of tasks to agents and resolve collisions among paths and MLA* \cite{grenouilleau2019multi} on the low level to compute a time-optimal path for each agent that visits the goal locations of its assigned task in sequence. We prove that CBS-TA-MLA is correct, complete and optimal. 

We develop three admissible heuristics for the high-level search of CBS-TA-MLA based on the existing admissible heuristics \cite{LiIJCAI19} for CBS for the MAPF problem and generalize Multi-Valued Decision Diagrams (MDDs) from the case of one goal location to the case of multiple goal locations. We also extend CBS-TA-MLA to a bounded-suboptimal version, called ECBS-TA-MLA, using ideas from the bounded-suboptimal version of CBS \cite{ECBS}. We experimentally compare the proposed algorithms for a variety of benchmark domains.


\section{Problem Definition}


The MG-TAPF problem instance consists of (1) an undirected graph $G=(V, E)$, where $V$ is the set of locations and $E$ is the set of unit-weight edges connecting locations, 
(2) $m$ agents $\{a_1, a_2,\ldots,a_m\}$, and for each agent $a_i$, there is a start location $s_i \in V$, and (3) $m$ tasks $\{\boldsymbol g_1, \boldsymbol g_2, \ldots,\boldsymbol g_m\}$, where each task $\boldsymbol g_j$ is characterized by a sequence of $K_j$ goal locations $\boldsymbol g_j = \langle \boldsymbol g_{j}[1], \ldots, \boldsymbol g_{j}[K_j]\rangle$.
Each agent $a_i$ can be assigned any task $\boldsymbol g_j$. 

Let $\pi_i(t)$ denote the location of agent $a_i$ at time $t$. A path $\pi_i = \langle \pi_i(0),\ldots ,\pi_i(T_i),\pi_i(T_i+1), \ldots \rangle$ for agent $a_i$ is a sequence of locations that satisfies the following conditions: (1) The agent starts at its start location, that is $\pi_i(0)=s_i$; (2) At each timestep $t$, the agent either moves to a neighboring location $\pi_i(t+1)\in V$ where $(\pi_i(t), \pi_i(t+1)) \in E$, or stays in its current locations, that is $\pi_i(t)=\pi_i(t+1)$; and (3) The agent visits all goal locations of its assigned task $\boldsymbol g_j$ in sequence and remains in the final goal location at the \textit{finish time} $T_i$, which is the minimum time $T_i$ such that $\pi_i(t)=\boldsymbol g_j[K_j]$ for all times $t=T_i, \ldots, \infty$.

Agents need to avoid collisions while moving to their goal locations. A collision between agents $a_i$ and $a_{j}$ is either: (1) a vertex collision $\langle a_i, a_j, u, t \rangle$, where two agents $a_i$ and $a_j$ are in the same location $u = \pi_i(t)=\pi_{j}(t)$ at time $t$; or (2) an edge collision $\langle a_i, a_j, u, v, t \rangle$ where two agents $a_i$ and $a_j$ traverse the same edge $(u, v)$ in  opposite directions $u = \pi_i(t) = \pi_j (t+1)$ and $v = \pi_i(t+1) = \pi_j(t+1)$ at timestep $t$. A \textit{plan} consists of an assignment of tasks to agents and a path for each agent. A \textit{solution} is a plan whose paths are collision-free. The flowtime $\sum_{i=1}^m T_i$ of a plan is the sum of the finish times of all agents. The problem of MG-TAPF is to find a solution that minimizes the flowtime. In this paper, we only consider the flowtime objective even though many of our results could be easily generalized to other objectives, such as makespan (the maximum of the finish times $\max_{1 \leq i \leq m} T_i$ of all agents) minimization.


\section{Complexity}

We show that the MG-TAPF problem is NP-hard to solve optimally for flowtime minimization, even when each task has only two goal locations. Similar to \cite{MaAAAI16} and \cite{MaAAMAS18}, we use a reduction from $\le$3,$=$3-SAT \cite{cat1984}, an NP-complete version of the Boolean satisfiability problem. A $\le$3,$=$3-SAT instance consists of $N$ Boolean variables $\{X_1, \ldots, X_N\}$ and $M$ disjunctive clauses $\{C_1, \ldots, C_M\}$, where each variable appears in exactly three clauses, uncomplemented at least once, and complemented at least once, and each clause contains at most three literals. Its decision question asks whether there exists a satisfying assignment. We first show a constant-factor inapproximability result for makespan minimization.

\newtheorem{theorem}{Theorem}
\begin{theorem}\label{thm:makespan_hardness}
    For any $\epsilon > 0$, it is NP-hard to find a $(4/3 - \epsilon)$-approximate solution to the MG-TAPF problem for makespan minimization, even if each task has exactly two goal locations.
\end{theorem}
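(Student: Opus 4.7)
The plan is to mimic the reduction from $\le$3,$=$3-SAT used in \cite{MaAAAI16} to prove a $4/3$ inapproximability result for MAPF makespan, and carefully modify it so the resulting instance respects the MG-TAPF structure with \emph{exactly two} goal locations per task. Given a $\le$3,$=$3-SAT formula $\phi$ with $N$ variables and $M$ clauses, I would construct in polynomial time an MG-TAPF instance $I(\phi)$ together with a target value $T^\ast$ such that: if $\phi$ is satisfiable, the optimal makespan of $I(\phi)$ is at most $3L$ for some common scale parameter $L$; and if $\phi$ is unsatisfiable, the optimal makespan is at least $4L$. Choosing $L$ large enough (polynomial in $N+M$) absorbs the additive $-\epsilon$ slack and forces any polynomial-time $(4/3-\epsilon)$-approximation to decide $\phi$, which proves the theorem.

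The construction uses two families of gadgets. A \emph{variable gadget} for $X_i$ is a small sub-graph containing two disjoint corridors of equal length, one passing through a ``positive'' literal vertex $\ell_i^+$ and one through a ``negative'' literal vertex $\ell_i^-$. An agent is given a two-goal task whose first goal forces it to enter the gadget and whose second goal forces it to exit on the far side; the agent can freely choose either corridor, and the corridor it selects encodes the truth value of $X_i$. A \emph{clause gadget} for $C_j$ uses an agent with a two-goal task whose geodesic path of length $3L$ is only collision-free when at least one of its three literal vertices is \emph{not} being occupied (at the critical timestep) by the corresponding variable-gadget agent. If every literal in $C_j$ is occupied, the clause agent must take a detour of length at least $4L$, pushing its finish time, and hence the makespan, to $\ge 4L$. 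The two-goal restriction is satisfied by design since every task is built as an (entry, exit) pair, and the gadgets are combined on a planar grid so that inter-gadget interference is negligible (paddable by adding buffer edges whose cost is absorbed into $L$).

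Finally, I would verify the two directions of the gap. For completeness (SAT $\Rightarrow$ makespan $\le 3L$), use a satisfying assignment to route each variable-gadget agent through the corridor corresponding to its assigned truth value; in each clause at least one literal vertex is vacated at the critical time, so the clause-gadget agent uses its length-$3L$ direct path, and all tasks finish by time $3L$. For soundness (makespan $< 4L$ $\Rightarrow$ SAT), read off a truth assignment from the corridor choices of the variable-gadget agents and check that each clause agent's direct path being unblocked forces at least one literal in each clause to be true.

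The main obstacle is engineering the gadgets so that the two-goal-per-task constraint is not only respected but actively \emph{used} to encode the choice structure that in the original MAPF reduction was encoded by task assignment. Specifically, I have to ensure that (i) the two intermediate goals of the variable-gadget tasks do not give the agent a ``cheap'' way out of committing to a literal (e.g.\ by letting it oscillate between corridors), and (ii) the detour in an unsatisfied clause is truly $\ge 4L$ and cannot be shortened by rerouting through another gadget, which requires separating gadgets by buffer regions and tightening the distances so that no shorter alternative exists. Once those geometric constraints are nailed down, the gap argument above yields the claimed $(4/3-\epsilon)$-inapproximability.
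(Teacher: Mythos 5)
Your high-level plan (reduce from $\le$3,$=$3-SAT, build variable and clause gadgets, and establish a makespan gap of $3$ versus $4$) is the same skeleton the paper uses, but your sketch misses the one step that is actually new for MG-TAPF as opposed to MAPF: controlling the task \emph{assignment}. In MG-TAPF the solver chooses which agent executes which task, so nothing in your construction prevents, say, a clause task from being assigned to a literal agent, or the task intended for $X_1$ from being assigned to the agent sitting in the gadget for $X_2$; any such cross-assignment could in principle create a makespan-$3$ solution that does not correspond to a satisfying assignment, breaking soundness. You even describe the gadgets as if tasks were pre-assigned (``an agent is given a two-goal task''), and you misremember the source of the choice structure: in the MAPF reduction of Ma et al.\ the truth values are encoded by the agent's choice between a shared and a private path, not by task assignment. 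The paper's fix is short but essential: it sets the \emph{first} goal location of every task equal to the start location of its intended agent ($\boldsymbol g_{iT}[1]=s_{iT}$, $\boldsymbol g_{j}[1]=c_j$, etc.), which pins the optimal assignment to the identity and collapses the problem back to the known MAPF construction. Your proposal needs an argument of this kind, and without it the soundness direction does not go through.

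A secondary issue is the scale parameter $L$. The claim that a large $L$ is needed to ``absorb the additive $-\epsilon$ slack'' is a misconception: the approximation guarantee is multiplicative, so with optimal makespans $3$ and $4$ a $(4/3-\epsilon)$-approximation already returns a value below $4$ exactly when the formula is satisfiable, and $L=1$ suffices. Worse, stretching every corridor to length $L$ is not free: the original gadgets rely on paths of length exactly three intersecting at precise timesteps, and once corridors are long an agent can wait or desynchronize, so you would have to re-prove that the intended collisions still force the $3L$-versus-$4L$ gap. You flag this under ``geometric constraints to be nailed down,'' but that is where the actual work lies, and the paper avoids it entirely by keeping the unscaled construction.
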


\begin{proof}
    We use a reduction similar to that used in the proof of Theorem 3 in \cite{MaAAAI16} to construct an MG-TAPF instance with $m=M+2N$ agents and the same number of tasks that has a solution with makespan three if and only if a given $\le$3,$=$3-SAT instance with $N$ variables and $M$ clauses is satisfiable.
    
	We follow the notations used in the proof of Theorem 3 in \cite{MaAAAI16} and point out the differences here: For each variable $X_i$ in the $\le$3,$=$3-SAT instance, we construct two ``literal'' agents $a_{iT}$ and $a_{iF}$ with start locations $s_{iT}$ and $s_{iF}$, and two tasks $\boldsymbol g_{iT}$ and $\boldsymbol g_{iF}$, each with two goal locations. We set $\boldsymbol g_{iT}[1] = s_{iT}$, $\boldsymbol g_{iT}[2] = t_{iT}$, $\boldsymbol g_{iF}[1] = s_{iF}$, and $\boldsymbol g_{iF}[2] = t_{iF}$. For each clause $C_j$ in the $\le$3,$=$3-SAT instance, we construct a ``clause'' agent $a_j$ with start location $c_j$ and a task $\boldsymbol g_{j}$ with two goal locations $\boldsymbol g_{j}[1] = c_{j}$ and $\boldsymbol g_{j}[2] = d_{j}$. Therefore, any optimal solution must assign every task to the agent whose start location is the first goal location of the task and let the agent execute the task.
	
	Using the same arguments as in the proof of Theorem 3 in \cite{MaAAAI16}, we can show that the constructed MG-TAPF instance has a solution with makespan three if and only if the $\le$3,$=$3-SAT instance is satisfiable, and always has a solution with makespan four, even if the $\le$3,$=$3-SAT instance is unsatisfiable. For any $\epsilon>0$, any MG-TAPF algorithm with approximation ratio $4/3 - \epsilon$ thus computes a solution with makespan three the $\le$3,$=$3-SAT instance is satisfiable and thus solves $\le$3,$=$3-SAT problem.
\end{proof}

In the proof of Theorem \ref{thm:makespan_hardness}, the MG-TAPF instance reduced from the given $\le$3,$=$3-SAT instance has the property that the length of every path from the start location of every agent to the final goal location of the task assigned to the agent is at least three. Therefore, if the makespan is three, every agent arrives at the final goal location of its assigned task in exactly three timesteps, and the flowtime is $3m$. Moreover, if the makespan exceeds three, the flowtime exceeds $3m$, yielding the following theorem.

\begin{theorem}\label{thm:flowtime_hardness}
    It is NP-hard to find the optimal solution to the MG-TAPF problem for flowtime minimization, even if each task has exactly two goal locations.
\end{theorem}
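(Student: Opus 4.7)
The plan is to reuse the very same MG-TAPF instance constructed in the proof of Theorem~\ref{thm:makespan_hardness} and turn the makespan gap there into a flowtime gap here. Concretely, I would give a polynomial-time reduction from $\le$3,$=$3-SAT to the decision version of MG-TAPF for flowtime, mapping a SAT instance with $N$ variables and $M$ clauses to the same $m=M+2N$ agents and tasks, and asking whether the resulting MG-TAPF instance admits a solution with flowtime at most $3m$.

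The first step is to verify the structural fact that makes this work: in the construction from Theorem~\ref{thm:makespan_hardness}, every path from the start location $s_i$ of any agent $a_i$ to the final goal location of its (forced) task has length at least three. This holds because each task's first goal coincides with the agent's start and every shared, private, or clause path exhibited in the construction uses exactly three edges. I would state this observation explicitly as a lemma-style remark, appealing to the definitions already given in the previous proof so as not to duplicate the graph construction.

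Given this, the flowtime argument is immediate. Each finish time satisfies $T_i \ge 3$, hence $\sum_i T_i \ge 3m$, with equality if and only if $T_i=3$ for every $i$, which in turn is equivalent to the makespan being exactly three. Thus the constructed instance has flowtime $3m$ iff it has makespan three iff (by the equivalence proved in Theorem~\ref{thm:makespan_hardness}) the input $\le$3,$=$3-SAT instance is satisfiable. A polynomial-time optimal flowtime solver would decide this by comparing the returned flowtime against $3m$, yielding a polynomial-time algorithm for $\le$3,$=$3-SAT and hence the NP-hardness claim. Since the construction only uses tasks with two goal locations, the ``even if each task has exactly two goal locations'' qualifier transfers for free.

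I do not anticipate a real obstacle here; the only subtle point is making sure that the lower bound ``every path has length at least three'' is justified rigorously from the earlier construction rather than merely asserted, since the flowtime bound $3m$ depends on it being a lower bound for \emph{every} agent and not just for the bottleneck agent that determines the makespan. I would therefore be careful to enumerate the three agent types (two literal types and the clause type) and observe in each case that no strictly shorter path exists in the constructed graph, which follows because in each case the start and final goal are at graph distance three by construction.
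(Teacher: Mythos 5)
Your proposal is correct and follows essentially the same route as the paper: the paper likewise reuses the construction from Theorem~\ref{thm:makespan_hardness}, notes that every agent's start-to-final-goal distance is at least three so the flowtime is $3m$ exactly when the makespan is three and exceeds $3m$ otherwise, and concludes NP-hardness. Your added care in justifying the length-three lower bound for every agent type is a reasonable elaboration of what the paper states more briefly.
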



\section{CBS-TA-MLA}

The CBS-TA-MLA algorithm is a two-level search algorithm, where the low-level MLA* algorithm plans an optimal path for each agent based on the task assignment and the constraints provided by the high-level CBS-TA algorithm.

\subsection{High Level: CBS-TA}

\begin{algorithm}[t]
\setstretch{0.7}
\DontPrintSemicolon
\SetKwInOut{input}{Input}
\SetKwInOut{output}{Output}
\SetKw{return}{Return}

OPEN $\gets \emptyset$ \;
\tcp{initialize first root node R}
R.root $\gets$ True \;
R.assignment $\gets$ firstAssignment() \;
R.constraints $\gets \emptyset$ \;
\For{each agent $a_i$}{
    R.paths[$a_i$] $\gets$ MLA*($a_i$, R.assignment[$a_i$], R.constraints) \;
}
R.cost $\gets$ flowtime(R.paths) \;
R.collisions $\gets$ findCollisions(R) \;
OPEN $\gets \text{OPEN} \cup \{R\}$ \;

\While{OPEN $\neq \emptyset$}{
    
    N $\gets$ lowest cost node from OPEN \;
    OPEN $\gets \text{OPEN} \setminus  \{\text{N}\}$ \;
    \If{N.paths do not have collisions}{
        \return{N.assignment, N.paths}
    }
    
    \If{N.root is True}{
        \tcp{initialize new root node R with next-best task assignment}
        R.root $\gets$ True \;
        R.assignment $\gets$ nextAssignment() \;
        R.constraints $\gets \emptyset$ \;
        \For{each agent $a_i$}{
            R.paths[$a_i$] $\gets$ MLA*($a_i$, R.assignment[$a_i$], R.constraints) \;
        }
        R.cost $\gets$ flowtime(R.paths) \;
        R.collisions $\gets$ findCollisions(R) \;
        OPEN $\gets \text{OPEN} \cup \{R\}$ \;
        
    }
    
    $\langle a_i, a_j, u, t \rangle / \langle a_i, a_j, u, v, t \rangle$  $\gets$ chooseCollision(N)  \;

    \tcp{generate child nodes}
    
    \For{agent $a_k$ in $\{a_i, a_j\}$}{
        Q.root $\gets$ False \;
        Q.assignment $\gets$ N.assignment \;
        Q.constraints $\gets \text{N.constraints} \cup  \{\langle a_k, u, t \rangle /  \langle a_k, u, v, t \rangle\}$ \;
        Q.paths[$a_k$] $\gets$ MLA*($a_k$, Q.assignment[$a_k$], Q.constraints) \;
        \If{Q.paths[$a_k$] is None}{
            continue to the next iteration \;
        }
        Q.cost $\gets$ flowtime(Q.paths) \;
        Q.collisions $\gets$ findCollisions(Q) \;
        OPEN $\gets \text{OPEN} \cup \{$Q$\}$ \;
        
    }
}

\return{No Solution}

\caption{High Level of CBS-TA-MLA}\label{alg:CBS-TA}
\end{algorithm}

Conflict-Based Search with Task Assignment (CBS-TA) is a best-first search algorithm, which was initially designed to solve the TAPF problems \cite{honig2018conflict}. We extend it to solving MG-TAPF problem by replacing the low-level search algorithm with MLA*. Algorithm \ref{alg:CBS-TA} shows the pseudo-code. CBS searches a binary tree, called constraint tree (CT), while CBS-TA searches a forest that contains multiple CTs. Each tree corresponds to a different task assignment. Each node, called CT node, in the CT contains (1) a Boolean value $root$, indicating whether the node is a root node of a CT; (2) an $assignment$, which is the task assignment of the node; (3) a set of $constraints$, where a vertex constraint $\langle a_i, u, t \rangle$ prohibits agent $a_i$ from being at location $u$ at time $t$ and an edge constraint $\langle a_i, u, v, t \rangle$ prohibits agent $a_i$ from moving along from $u$ to $v$ at timestep $t$; (4) a set of $paths$ with respect to the task assignment and the constraints; and (5) a $cost$, which is the flowtime of the paths [Lines 2-7]. 

An $m \times m$ cost matrix $C$ stores the distances from the start locations of all agents to the final goal locations of all tasks where all intermediate goal locations of the task are visited in sequence while ignoring collisions with the other agents. CBS-TA starts with a single root node with the best task assignment (the task assignment with the lowest flowtime which ignoring collisions among agents). The best task assignment is calculated by applying the Hungarian method \cite{Kuhn1955} to the cost matrix $C$. Once the task assignment is calculated, the corresponding paths of the agents are planned by the low-level MLA* search algorithm . CBS-TA then finds collisions among the planned paths, stores the number of collisions in the node and adds the node to the OPEN list [Lines 8-9]. A new root node with the next-best task assignment is created if the currently expanded node is a root node [Lines 15-23]. We use the next-best task assignment algorithm in \cite{honig2018conflict}. 
See \cite{honig2018conflict} for details.

CBS-TA removes a node $N$ with the lowest cost $N.cost$ from the OPEN list to expand (breaking ties in favor of the paths in node with the smallest number of collisions) 
[Lines 11-12]. First, it checks whether the number of collisions is 0. If so, $N$ is declared a goal node, and $N.assignment$ and $N.paths$ are returned [Lines 13-14]. Otherwise, CBS-TA resolves an earliest vertex collision $\langle a_i, a_j, u, t\rangle$ (or edge collision $\langle a_i, a_j, u, v, t\rangle$) [Line 24] by generating two child nodes. Child nodes inherit the constraint set and paths from $N$ [Lines 25-29]. CBS-TA adds constraint $\langle a_i, u, t \rangle$ (or $\langle a_i, u, v, t \rangle$) to the constraint set of one child node, and adds constraint $\langle a_j, u, t \rangle$ (or $ \langle a_j, v, u, t \rangle$) to that of the other child node. It then calls the low-level MLA* search algorithm to replan the path of $a_i$ (or $a_j$) to satisfy the new constraint set. If such a path does not exist, CBS-TA prunes the child node [Lines 30-31]. Otherwise, CBS-TA updates the cost and the number of collisions between the newly planned path and the existing paths of the other agents and adds the child node to the OPEN list [Lines 32-34]. Once the OPEN list is empty, CBS-TA terminates the search unsuccessfully [Line 35]. 

\subsection{Low Level: MLA*}

Multi-Label Space-Time A* (MLA*) finds a time-optimal path for an agent $a_i$ (a path with the smallest finish time $T_i$) that visits all goal locations of its assigned task $\boldsymbol g_j$ in sequence and obeys a set of constraints. MLA* was first introduced for two goal locations \cite{grenouilleau2019multi} and then extended to more than two goal locations \cite{li2020lifelong}. MLA* extends Space-Time A* \cite{WHCA} by adding a label indicating the different segments between the goal locations, where label $k$ indicates that the next goal location to visit is $\boldsymbol g_j[k]$. 

We now formally describe MLA*. MLA* is an A* search whose states are tuples of a location, a time and a label. It starts with state $\langle s_i, 0, 1 \rangle$, indicating agent $a_i$ being at location $s_i$ at time 0 with label 1. A directed edge exists from state $\langle u, t, k \rangle$ to state $\langle v, t+1, k' \rangle$ if and only if (1) $u = v$ or $(u, v) \in E$ and (2) $k' = k+1$ if $v = \boldsymbol g_j[k]$ and $k'=k$ otherwise.  To obey the constraints, the set of states $\{\langle v, t, k \rangle \mid k=1, \ldots, K_j+1\}$ is removed from the state space of agent $a_i$ if and only if there is a vertex constraint $\langle a_i, v, t \rangle$, and the set of edges $\{(\langle u, t, k \rangle, \langle v, t+1, k'\rangle) \mid k=1,\ldots,K_j\}$ is removed if and only if there is an edge constraint $\langle a_i, u, v, t \rangle$. If MLA* expands a goal state $\langle \boldsymbol g_j[K_j], t, K_j+1 \rangle$ and the agent can stay at the goal location forever (without violating any vertex constraints), it terminates and returns the path from the start state to the goal state.

During the search, the $h$-value of each state $\langle v, t, k \rangle$ is set to $\text{dist}(v, \boldsymbol g_j[k])+\sum_{k'=k}^{K_j-1} \text{dist}(g_j[k'], g_j[k'+1])$, that is, the shortest distance from location $v$ to visit all unvisited goal locations in task $\boldsymbol g_j$ in sequence. The distances dist($v, \boldsymbol g_j[k]$) from each location $v \in V$ to all goal locations $\boldsymbol g_j[k]$ with $j=1, \ldots, m$ and $k=1, \ldots, K_j$ are pre-computed by searching backward once from each goal location $\boldsymbol g_j[k]$ on graph $G$.

\subsection{Properties of CBS-TA-MLA}
We now show that CBS-TA-MLA is complete and optimal.

\begin{theorem}
    CBS-TA-MLA is guaranteed to find an optimal solution if the given MG-TAPF instance is solvable and correctly identifies an unsolvable MG-TAPF instance with an upper bound of $\mathcal O(|V|^3 \cdot \sum_{j=1}^m K_j)$ on the finish time $T_i$ of any agent at the final goal location of its assigned task.
\end{theorem}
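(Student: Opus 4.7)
The plan is to break the proof into three pieces corresponding to the three required properties: optimality of the low-level paths returned by MLA*, optimality of the high-level best-first search of CBS-TA, and a finite horizon on finish times that lets MLA* and CBS-TA terminate (and hence correctly report unsolvability).

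For the low level, I would argue that MLA* is a standard A* search in the state space of tuples $\langle v, t, k\rangle$, so its correctness and optimality reduce to verifying that its heuristic is admissible. Admissibility follows because the heuristic is a sum of true shortest-path distances in $G$ between consecutive remaining goal locations; each such distance lower-bounds the time needed to traverse the corresponding segment, and removing states or edges due to constraints can only make feasible paths longer. Hence MLA* returns a path of minimum finish time that is consistent with the assignment and the constraints, whenever such a path exists.

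For the high level, I would adapt the standard CBS optimality proof to the multi-goal, multi-assignment setting. The three invariants I would establish are: (a) for every CT node $N$, $N.\mathrm{cost}$ is a valid lower bound on the flowtime of any solution extending $N.\mathrm{assignment}$ under $N.\mathrm{constraints}$, which follows from low-level optimality; (b) splitting on a chosen vertex or edge collision yields two children whose combined feasibility sets still cover every solution extending $N$, since any such solution must violate at least one of the two disjunctive constraints; and (c) the forest, together with the next-best-assignment mechanism of \cite{honig2018conflict}, generates root nodes in non-decreasing order of their unconstrained-optimal flowtime, so every assignment whose unconstrained cost is at most the eventual optimum has its root inserted into OPEN in time. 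Best-first expansion then guarantees that the first collision-free node returned is optimal.

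The main obstacle is the finish-time bound, which I would prove constructively. I would exhibit, for any solvable MG-TAPF instance, a solution of makespan $\mathcal{O}(|V|^3 \cdot \sum_{j=1}^m K_j)$ obtained by serializing the goals: at each of the $\sum_{j=1}^m K_j$ stages, a single agent is moved from its current location to its next required goal while the other agents step aside, which is a pebble-motion subproblem on $G$ and, by classical results \cite{YuLav13STAR}, admits a solution of length $\mathcal{O}(|V|^3)$ whenever feasible. Feasibility of each subproblem follows from solvability of the whole instance, and every $T_i$ is bounded above by the makespan. Enforcing this bound as a depth cutoff for MLA* makes every low-level call terminate; since only finitely many distinct constraint sets can arise within this horizon and only finitely many assignments have unconstrained-optimal cost within the bound, CBS-TA either returns an optimal solution or exhausts OPEN and correctly reports that none exists.
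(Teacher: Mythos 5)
Your decomposition into low-level optimality, high-level optimality, and a finite horizon matches the paper's structure; the paper dispatches the first two by citation to \cite{honig2018conflict} and \cite{grenouilleau2019multi}, so your sketches of admissibility and of the CBS-TA covering invariants are consistent with it. The divergence is in the horizon argument, and there is a genuine gap there. You bound the makespan of \emph{some} solution (the serialized one), and then enforce that bound as a depth cutoff. But the theorem needs a bound on the finish times of an \emph{optimal} (flowtime-minimal) solution: a flowtime-optimal solution may let one agent finish much later than the makespan of your serialized solution in exchange for letting others finish earlier (e.g., your construction gives finish times $(10,10)$ with flowtime $20$, while the unique optimum is $(1,15)$ with flowtime $16$). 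With your cutoff, MLA* would refuse to generate the length-$15$ path, every CT node containing the optimal solution would be pruned, and CBS-TA would return a suboptimal answer while still reporting ``success.'' So your cutoff is sound for termination and for unsolvability detection, but not for the optimality claim as you state it. The cheap repair is to multiply your cutoff by $m$ (since the optimal flowtime is at most $m$ times your constructed makespan, each $T_i$ of the optimum is at most $m\cdot\mathcal O(|V|^3\cdot\sum_j K_j)$), but that proves a bound weaker by a factor of $m$ than the one in the theorem statement.

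The paper closes this hole by arguing about an optimal solution directly rather than constructing a fresh one: it takes an arbitrary optimal solution, cuts it at the at most $\sum_{j=1}^m K_j$ breakpoints where some agent reaches a new goal, observes that each inter-breakpoint segment is a solvable MAPF instance between two fixed configurations and hence (by \cite{YuR14}) replaceable by one of makespan $\mathcal O(|V|^3)$, and concludes by optimality that each segment of the optimal solution already has length at most $\mathcal O(|V|^3)$. That yields the stated bound on the optimal solution's own finish times, which is exactly what makes the pruning safe. Separately, your assertion that ``feasibility of each subproblem follows from solvability of the whole instance'' is true but not automatic: it relies on the reversibility of MAPF moves, so that the configurations reachable from the start form an equivalence class and the full solution witnesses reachability of a configuration with the relevant agent at its next goal. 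You should say this explicitly if you keep the serialization construction.
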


\begin{proof}
The proof of the optimality of  CBS-TA-MLA is trivial as CBS-TA and MLA* have been proved to be optimal in \cite{honig2018conflict} and \cite{grenouilleau2019multi}, respectively.
As for the completeness, consider an arbitrary optimal solution to the given MG-TAPF instance with paths $\pi_i$. The solution can be divided chronologically into at most $\mathcal K = \sum_{j=1}^m K_j$ segments at breakpoints $t^{(0)}=0, t^{(1)}, \ldots, t^{\mathcal K} = \max_{i} T_i$ where the label of at least one agent changes (because it reaches a new goal location of its assigned task) at each agent $t^{(\kappa)}$.
Since there exists a solution with at most $U=\mathcal O(|V|^3)$ ($U$ is a constant depending on $V$ only) agent movements (edge traversals) to any solvable MAPF instance \cite{YuR14}, there exist collision-free paths for all agents with makespan at most $U$ that move each agent $a_i$ from $\pi(t^{(\kappa-1)})$ to $\pi(t^{(\kappa)})$ and thus $t^{(\kappa)} - t^{(\kappa-1)} \leq U, \forall \kappa \geq 1$.  
Therefore, $t^{(\mathcal K)} \leq U \cdot \sum_{j=1}^m K_j$. 
CBS-TA-MLA can thus safely prune any state whose time is larger than $U \cdot \sum_{j=1}^m K_j$ on the low level and terminate on Line 35 for any given unsolvable MG-TAPF instance when OPEN eventually becomes empty in finite time. 
\end{proof}

\subsection{Example}
Consider the example in Figure \ref{fig:example_map} with two agents $a_1$ and $a_2$ located at $s_1=B1$ and $s_2=A2$ respectively. Two tasks $\boldsymbol g_a$ and $\boldsymbol g_b$ will be assigned to two agents, where $\boldsymbol g_a[1]=C2$, $\boldsymbol g_a[2]=A2$, $\boldsymbol g_b[1]=B3$ and $\boldsymbol g_b[2]=B1$. The corresponding high-level forest of CBS-TA-MLA is shown in Figure \ref{fig:example_forest}. The first root CT node $N_{1}$ assigns $\boldsymbol g_b$ to $a_1$ and $\boldsymbol g_a$ to $a_2$.  CBS-TA-MLA chooses to expand the node with the minimum cost, which is $N_{1}$. It detects two collisions $\langle a_1, a_2, B2, 1 \rangle$ 
and $\langle a_1, a_2, B2, 3 \rangle$. Since $N_1$ is a root CT node, the second root node $N_2$ with next-best task assignment $\boldsymbol g_a$ to $a_1$ and $\boldsymbol g_b$ to $a_2$ is created and added to the OPEN list. CBS-TA-MLA resolves the earliest collision $\langle a_1, a_2, B2, 1 \rangle$ by creating two child nodes $N_3$ and $N_4$, where $a_1$ is prohibited from being in location $B2$ at time 1 in $N_3$ by adding $\langle a_1, B2, 1 \rangle$ to $N_3.constraint$ and $a_2$ is prohibited from being in location $B2$ at time 1 in $N_4$. As the low-level search can find paths for the replanned agents, both child nodes are added to the OPEN list. In the next iteration, CBS-TA-MLA picks $N_2$ for expansion, but does not create a root node since there are only two possible task assignments. The paths in $N_2$ have collisions, and thus CBS-TA-MLA generates two child nodes and adds them to the OPEN list. Then, it selects $N_3$ which has no collisions. So, it declares $N_3$ the goal node and returns the task assignment and paths. 

\begin{figure}[t]
    \centering
    \includegraphics[width=.45\columnwidth]{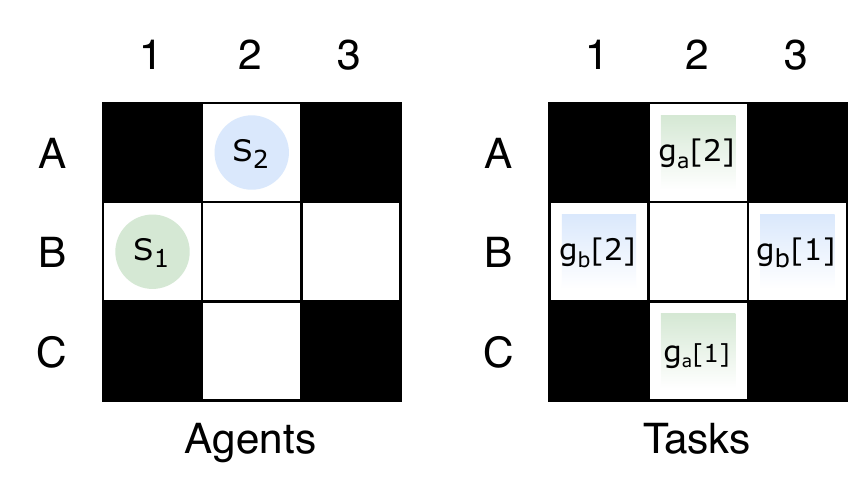}
    \caption{An example instance with agents and tasks.}
    \label{fig:example_map}
\end{figure}

\begin{figure}[t]
    \centering
    \includegraphics[width=.5\columnwidth]{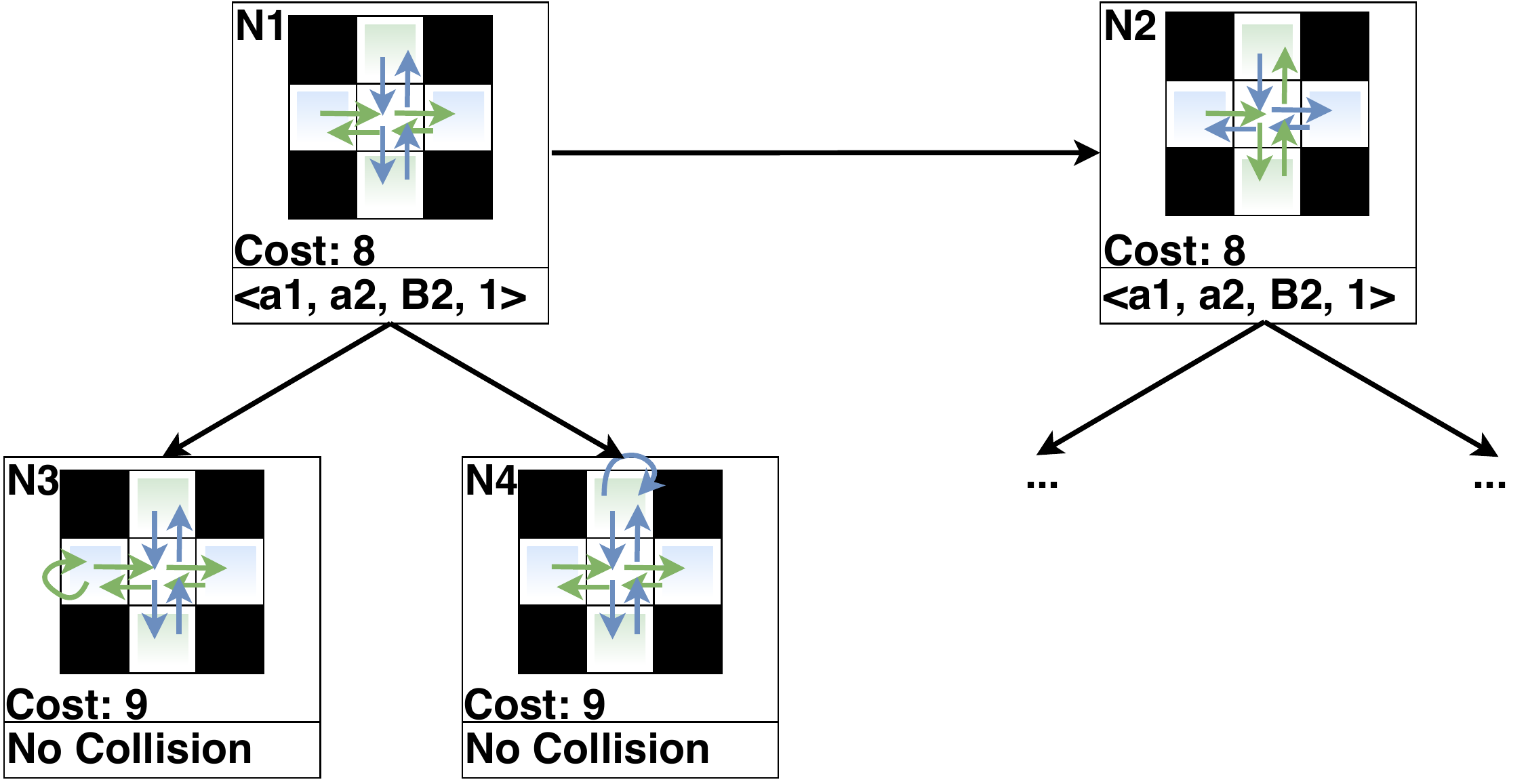}
    \caption{The search forest created by the high-level search of CBS-TA-MLA for the example from Figure \ref{fig:example_map}. }
    \label{fig:example_forest}
\end{figure}


\section{Extensions of CBS-TA-MLA}

This section introduces three extensions of CBS-TA-MLA, namely an improved optimal version (with heuristics), a bounded-suboptimal version and a greedy version.

\subsection{CBS-TA-MLA with Heuristics (CBSH-TA-MLA)}
CBS with heuristics \cite{LiIJCAI19} introduces three admissible heuristics (namely CG, DG and WDG) for the high-level search of CBS, which reduce the number of expanded CT nodes. The collisions among paths of a CT node are classified in three types\cite{ICBS}: cardinal collisions if both of the resulting child nodes have a larger cost than the node itself, semi-cardinal collisions if only one of its child nodes has a larger cost than the node itself, and non-cardinal collisions if both of the child nodes have the same cost as the node itself. 

The technique to classify collisions is a Multi-Valued Decision Diagrams (MDDs) \cite{ICBS}. An MDD for agent $a_i$ at CT node $N$ is a directed acyclic graph consisting of all possible cost-optimal paths of $a_i$ with respect to the task assignment and constraints of $N$. Each MDD node consists of a location $v$ and a level/time $t$. A collision between agents $a_i$ and $a_j$ at time $t$ is cardinal iff the contested vertex/edge is the only vertex/edge at level $t$ of the MDDs of both agents ($t=1$ in Figure 3). To make the MDDs applicable in our case in which the cost-minimal paths contain all goal locations of the assigned task in the correct sequence, we add a label to each MDD node (see Figure \ref{fig:example_mdd}). 

CG heuristic only considers the cardinal collisions among paths. DG heuristic considers the dependency among agents and WDG heuristic considers the extra cost that each pair of dependent agents contributes to the total cost. WDG dominates DG, which dominates CG. See \cite{LiIJCAI19} for details. 

We adopt the techniques of CBS-TA-MLA for the CBS-TA-MLA with Heuristics algorithm (CBSH-TA-MLA). We maintain a new variable $min\_f\_val$ for the minimum $f$-value of all nodes in the OPEN list. Each node $N$ has two additional fields, namely $N.h\_val$ to represent the admissible $h$-value and $N.f\_val=N.cost+N.h\_val$ to represent the priority in the OPEN list. See \cite{LiIJCAI19} for details of $N.h\_val$ computation method. 

The chooseCollision(N) function in Algorithm \ref{alg:CBS-TA} chooses cardinal collisions first, semi-cardinal collisions next and non-cardinal collisions last (breaking ties in favor of the earliest collision). 

\begin{figure}[t]
\centering
\includegraphics[width=.5\columnwidth]{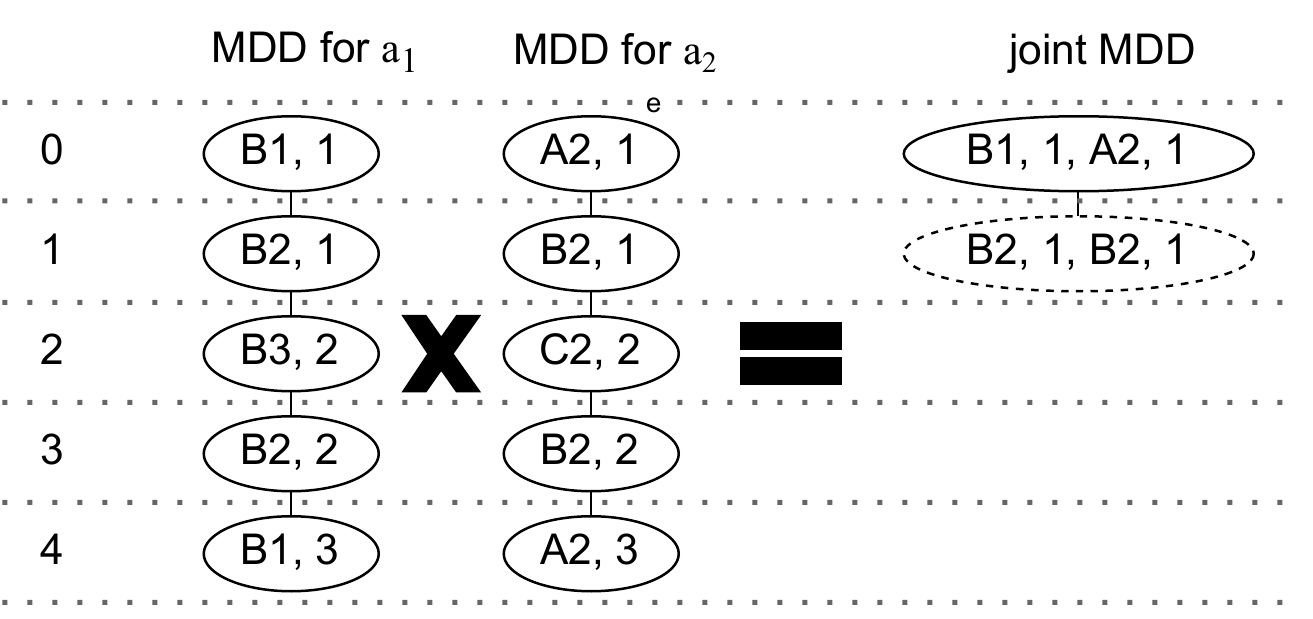} 
\caption{The MDDs and joint MDD for the example instance in Figure \ref{fig:example_map}. Levels of MDD nodes are shown on the left.}
\label{fig:example_mdd}
\end{figure}

\subsection{Enhanced CBS-TA-MLA (ECBS-TA-MLA)}

Similar to Enhanced CBS (ECBS) \cite{ECBS}, ECBS-TA-MLA is a bounded-suboptimal algorithm for MG-TAPF. It uses a focal search 
on both high and low levels. A focal search maintains two lists: OPEN and FOCAL. The OPEN list is sorted in increasing order of the $f$-values. The best node $N_{best}$ in the OPEN list is the node with the minimum $f$-value, which is denoted by $f_{best}$. The FOCAL list contains that subset of the nodes in the OPEN list whose $f$-values are at most $\omega \cdot f_{best}$.
The FOCAL list is sorted according to some other heuristic. The FOCAL search guarantees to find solutions that are a factor of at most $\omega$ worse than optimal by always expanding the best node in the FOCAL list.

\noindent \textbf{Low-level focal search: } The low-level focal search prioritizes nodes in the OPEN list with $f$-values and nodes in the FOCAL list with the number of collisions in paths between the current agent $a_i$ and the other agents in the CT node. When it finds a solution, it returns the path and the $f$-value of the best node $n$ in the OPEN list, which is the lower bound on the cost of the time-optimal path, denoted by $f_{best}(a_i)$. 

\noindent \textbf{High-level focal search: }
The high-level focal search sorts CT nodes in the OPEN list in increasing order of the sum of the lower bounds of all agents $LB(N)=\sum_{i=1}^m f_{best}(a_i)$. Let $N_{best}$ denote the node $N$ in the OPEN list with the minimum $LB(N)$. The FOCAL list contains that subset of CT nodes $N$ with $N.cost \leq \omega \cdot LB(N_{best})$. The nodes in the FOCAL list are sorted in increasing order of the number of collisions among $N.paths$. Since $LB(N_{best})$ is provably a lower bound on the optimal flowtime $C^*$, the cost of any CT node in the FOCAL list is at most $\omega \cdot C^*$. As a result, once a solution is found, its flowtime is at most $\omega \cdot C^*$, so it is bounded-suboptimal with suboptimality factor $\omega$.

\subsection{Greedy CBS-TA-MLA (TA+CBS-MLA)}

The 
TA+CBS-MLA performs best task assignment (TA) followed by the CBS-MLA algorithm. It starts with the root node with the best task assignment and does not generate any other root nodes. TA+CBS-MLA provides no optimality or completeness guarantee.  


\section{Experiments}

This section describes our experimental results on a 2.3GHz Intel Core i5 laptop with 16GB RAM. The algorithms are implemented in Python and tested on three maps, namely (1) a dense map, which is a $20\times 20$ warehouse map with $30\%$ obstacles (Figure \ref{fig:map_20x20}), (2) a sparse map, which is a $32\times 32$ random map with $10\%$ obstacles (Figure \ref{fig:map_32x32}), and (3) a $32\times 32$ empty map, all with a time limit of 120 seconds unless otherwise specified. 

\begin{figure}[t!]
\centering
    \begin{subfigure}[b]{0.48\columnwidth}\centering
    \includegraphics[width=.4\textwidth]{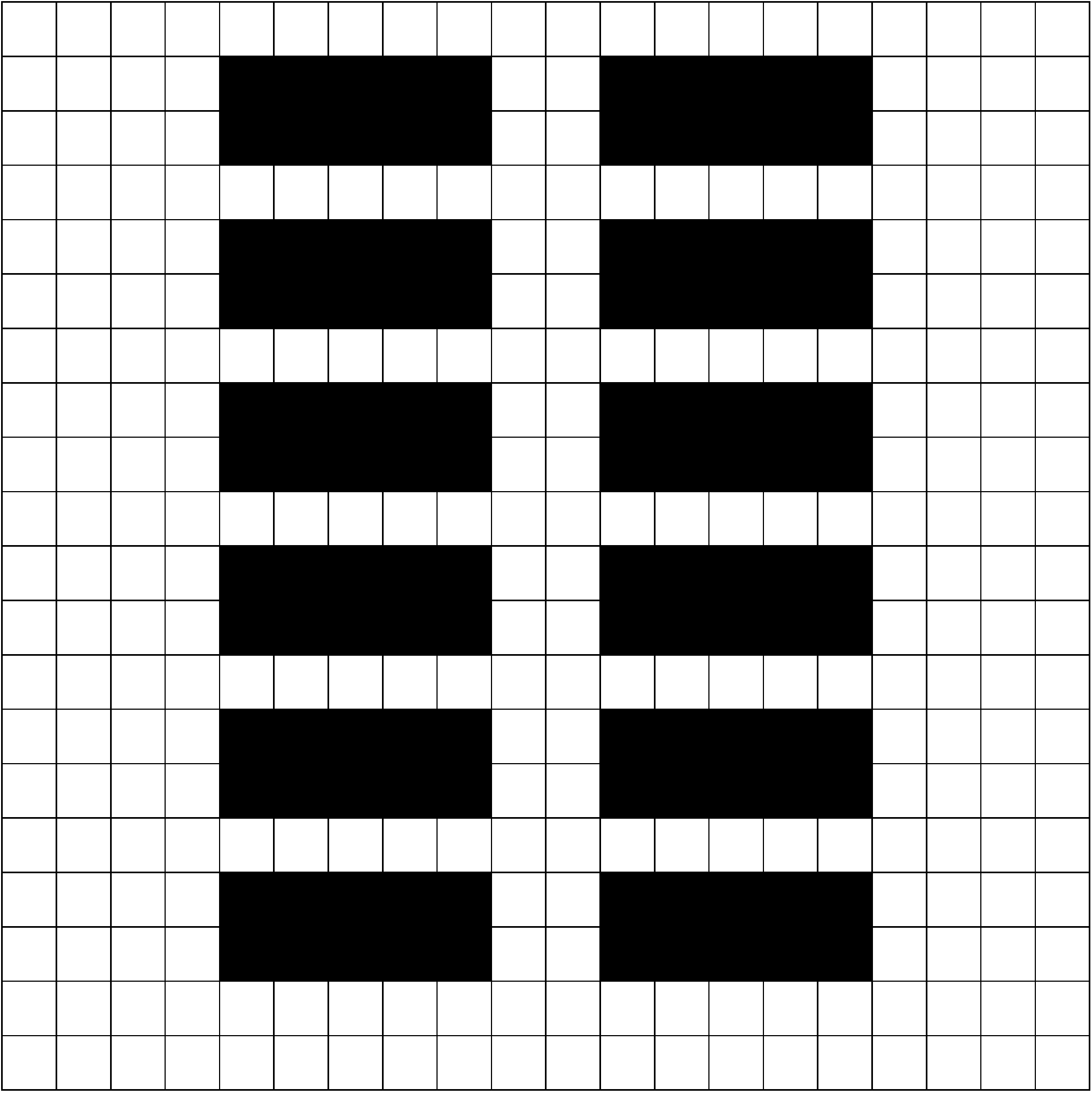}
    \caption{
    Dense map}\label{fig:map_20x20}
    \end{subfigure}
    \hfill
    \begin{subfigure}[b]{0.48\columnwidth}\centering
    \includegraphics[width=.4\textwidth]{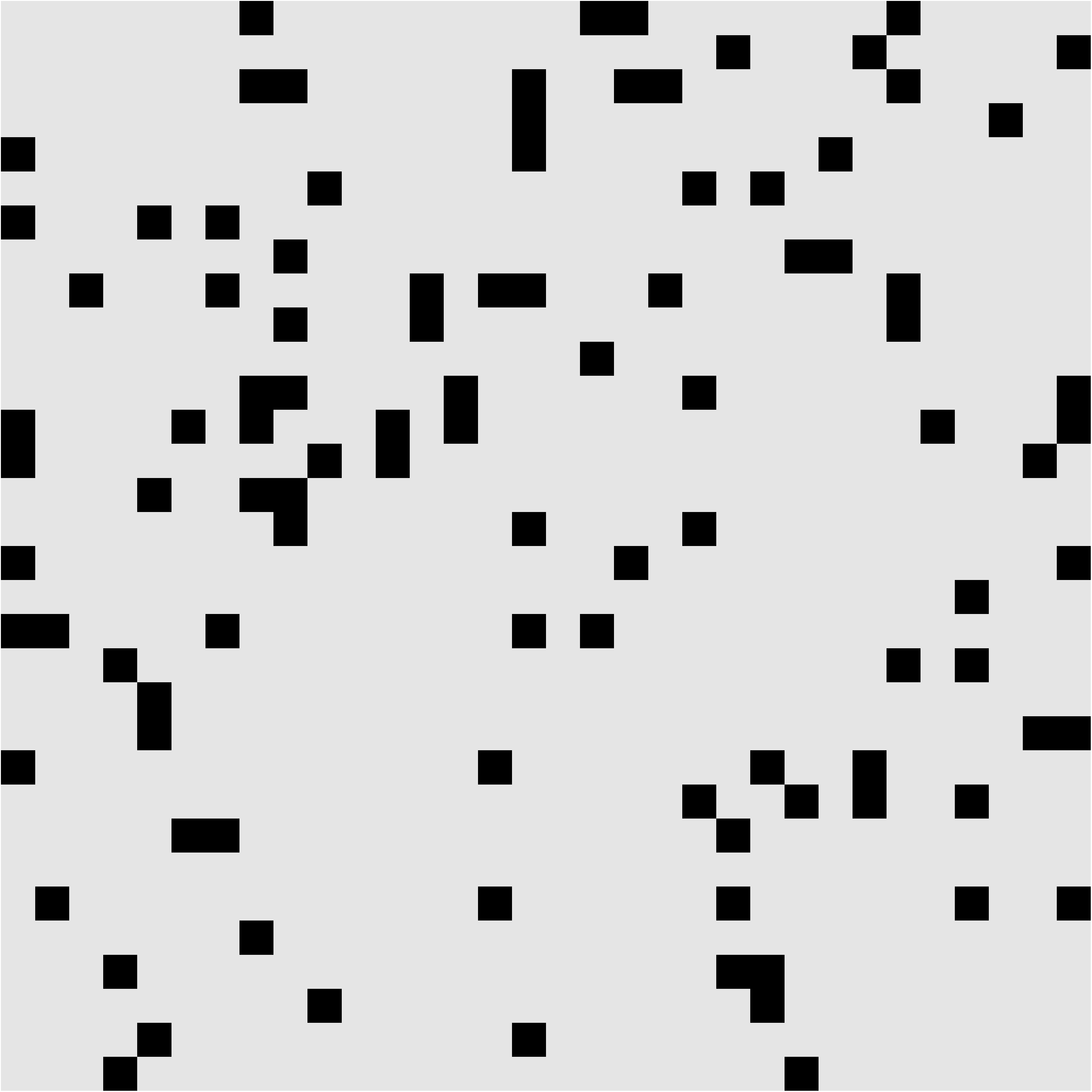}
    \caption{
    Sparse map}\label{fig:map_32x32}
    \end{subfigure}
    
\caption{Two maps used for the experiments.}
\end{figure}

\subsection{CBSH-TA-MLA}
We evaluate CBSH-TA-MLA using two test sets. In the first set, we use the dense map with 10 agents/tasks with randomly generated locations and report the success rate, the average number of expanded CT nodes and the average runtime over 100 instances. In the second test set, we use the sparse map and the empty map and report the above three values over 50 instances with a time limit of 300 seconds. The last two values are averaged over instances that are successfully solved by CBSH-TA-MLA with all four heuristics. Table \ref{tab:cbs_w_h} shows that WDG always results in the smallest number of expanded nodes, while DG always results in the smallest average runtime on all instances. This is so because WDG needs to compute the weights of the edges of the pairwise dependency graph, which requires executing the CBS-TA-MLA algorithm for two agents repeatedly.

\begin{table}[t]
    \renewcommand{\arraystretch}{0.5}
    \caption{Results for CBSH-TA-MLA using different heuristics on different maps with different numbers of agents/tasks, where each task consists of two goal locations.}
    \label{tab:cbs_w_h}
    \centering
    \Huge
	\scalebox{0.32}{%
    \begin{tabular}{|c|c|c||c|r|r|}
    \hline
     Map & Agents & Heuristics & \begin{tabular}[c]{@{}c@{}}Success\\Rate\end{tabular} & \begin{tabular}[c]{@{}c@{}}Nodes\\Expanded\end{tabular} & Runtime (s) \\
     \hline
     \multirow{4}{*}{Dense Map}&\multirow{4}{*}{10}&No&98/100& 34.18 & 2.54 \\
     &&CG&98/100& 28.86 &  2.23   \\
     &&DG&98/100& 25.54 &  \textbf{2.09}   \\
     &&WDG&97/100& \textbf{8.98} &  3.53 \\
    \hline 
    
    \multirow{4}{*}{Sparse Map}&\multirow{4}{*}{20}&No&44/50& 42.59 & 15.68 \\
     &&CG&44/50& 34.65 &  13.21   \\
     &&DG&46/50& 30.58 &   \textbf{11.8}  \\
     &&WDG&46/50& \textbf{4.63} & 13.17  \\
     \hline
    
    \multirow{4}{*}{Empty Map}&\multirow{4}{*}{20}&No&46/50& 23.02 & 6.96 \\
     &&CG&46/50& 12.08 & 4.03   \\
     &&DG&50/50& 3.48 &  \textbf{1.52}   \\
     &&WDG&48/50& \textbf{2.11} & 2.82  \\
     \hline
     
    \multirow{4}{*}{Empty Map}&\multirow{4}{*}{30}&No&40/50& 20.225 & 9.55  \\
     &&CG&40/50& 14.75 & 7.15    \\
     &&DG&46/50& 7.375 &  \textbf{4.24}   \\
     &&WDG&46/50& \textbf{4.375} &  4.52 \\
     \hline
     
    \end{tabular}
    }
\end{table}

\begin{table}[t!]
    \renewcommand{\arraystretch}{0.5}
    \caption{Results for ECBS-TA-MLA with different $\omega$ on different maps with different numbers of agents/tasks and different numbers of goal locations per task.}
    \label{tab:ecbs_ta}
    \centering
    \Huge
	\scalebox{0.32}{%
    \begin{tabular}{|c|c|c|c||c|r|r|r|}
    \hline
     Map & Agents & \begin{tabular}[c]{@{}c@{}}Goal\\Locations\end{tabular}& $\omega$ & \begin{tabular}[c]{@{}c@{}}Success\\Rate\end{tabular} & \begin{tabular}[c]{@{}c@{}}Nodes\\ Expanded\end{tabular} & Runtime (s) & Cost \\
    \hline
     \multirow{12}{*}{\begin{tabular}[c]{@{}c@{}}Dense\\Map\end{tabular}}&\multirow{4}{*}{2}&\multirow{4}{*}{2}&1.00&100/100& 22.68 & 2.20 & \textbf{144.69} \\
     &&&1.05&100/100& 6.00 &  0.70 & 145.57  \\
     &&&1.10&100/100& 3.27 &  0.32 & 146.77   \\
     &&&1.30&100/100& \textbf{0.85} &  \textbf{0.08} & 148.19 \\
    \cline{2-8}
    
    &\multirow{4}{*}{20}&\multirow{4}{*}{2}&1.00&29/100& 247.46 & 28.47 & \textbf{249.46} \\
     &&&1.05&78/100& 20.42 &  2.22 & 255.75  \\
     &&&1.10&97/100& 6.68 &  0.83 & 260.21   \\
     &&&1.30&\textbf{100/100}& \textbf{3.46} & \textbf{0.58} & 262.32 \\
    \cline{2-8}
    
    &\multirow{4}{*}{30}&\multirow{4}{*}{2}&1.00&0/100& / & / & / \\
     &&&1.05&14/100& / &  / & /  \\
     &&&1.10&48/100& / & / & /   \\
     &&&1.30&\textbf{96/100}& / &  /& / \\
    \hline
    
    \multirow{12}{*}{\begin{tabular}[c]{@{}c@{}}Sparse\\Map\end{tabular}}&\multirow{4}{*}{10}&\multirow{4}{*}{2}&1.00&100/100& 3.14 & 0.58 & \textbf{304.79} \\
     &&&1.05&100/100& 0.47 &  0.12 & 305.64  \\
     &&&1.10&100/100& 0.36 &  0.11 & 306.32   \\
     &&&1.30&100/100& \textbf{0.35} &  \textbf{0.10} & 306.79 \\
    \cline{2-8}
    
    &\multirow{4}{*}{20}&\multirow{4}{*}{2}&1.0&76/100& 28.34 & 9.22 & \textbf{537.49} \\
     &&&1.05&100/100& 1.96 &  0.79 & 541.38  \\
     &&&1.1&100/100& 1.48 &  \textbf{0.69} & 543.28   \\
     &&&1.3&100/100& \textbf{1.35} &  0.73 & 544.36 \\
    \cline{2-8}
    
   &\multirow{4}{*}{10}&\multirow{4}{*}{10}&1.00&73/100& 9.33 & 13.35 & \textbf{1846.46} \\
     &&&1.05&\textbf{88/100}& 0.68 &  4.52 & 1856.49  \\
     &&&1.10&86/100& \textbf{0.65} & 3.56 & 1858.03   \\
     &&&1.30&85/100& \textbf{0.65} &  \textbf{0.75} & 1858.03 \\
    \hline
    
    \end{tabular}
    }
\end{table}

\begin{table}[t!]
    \renewcommand{\arraystretch}{0.5}
    \caption{Results for different numbers of goal locations per task for ECBS-TA-MLA with different $\omega$ on the sparse map. The number of goal locations differs in different tasks.}
    \label{tab:num_goals}
    \centering
    \Huge
	\scalebox{0.32}{%
    \begin{tabular}{|c|c|c|r|r|}
    \hline
        Goal Locations & $\omega$ & Success Rate & Nodes Expanded & Runtime (s) \\
        \hline
        \multirow{2}{*}{2-5} & 1.1 & 95/100 & 1.26 & 1.96\\
        & 1.3 & 95/100 & 1.26 & 2.13\\
        \hline
        \multirow{2}{*}{6-10} & 1.1 & 72/100 & 2.04 & 3.51\\
        & 1.3 & 72/100 & 2.00 & 3.61\\
        \hline
        \multirow{2}{*}{11-15} & 1.1 & 80/100 & 7.03 & 18.36\\
        & 1.3 & 83/100 & 7.20 & 19.46\\
        \hline
        \multirow{2}{*}{16-20} & 1.1 & 78/100 & 14.61 & 48.11\\
        & 1.3 & 83/100 & 15.15 & 51.37\\
    \hline
    \end{tabular}
    }
\end{table}

\subsection{ECBS-TA-MLA}
We evaluate ECBS-TA-MLA with different suboptimality factors $\omega$ on the dense and the sparse maps with different numbers of agents/tasks (for both maps) and different numbers of goal locations in each task (for the sparse map). We report the success rate, the average number of expanded CT nodes, the average runtime and the average cost over 100 instances in Table \ref{tab:ecbs_ta}. The last three values are averaged over instances that are successfully solved by ECBS-TA-MLA with all four $\omega$. As expected, ECBS-TA-MLA achieves high success rates on the instances with small numbers of agents/tasks. The success rates only drop a bit with $\omega=1.3$ when the number of agents increases up to 30. In addition, when we increase $\omega$, the average number of expanded CT nodes and the average runtime decrease, while the average cost increases. 

We test ECBS-TA-MLA when tasks have different number of goal locations with 10 agents/tasks and report the same values as in Table \ref{tab:cbs_w_h} in Table \ref{tab:num_goals}. The experiment shows that, with an appropriate $\omega$, the success rate is still over $70\%$ within two minutes, even with a maximum of 20 goal locations per task. 

\subsection{TA+CBS-MLA}

We compare TA+CBS-MLA against CBS-TA-MLA on the dense map and report the success rate, the average cost, and the average runtime with 10 agents/tasks, where each task consists of two goal locations,  
over 100 instances in Table \ref{tab:tacbs_table}. CBS-TA-MLA outperforms TA+CBS-MLA in the average cost, but TA+CBS-MLA outperforms CBS-TA-MLA in both the success rate and the average runtime. The reason for this is that the number of possible task assignments is large, namely $10! \approx 3$ millions, so CBS-TA-MLA spends a significant amount of time computing task assignments.  

\begin{table}[t!]
    \renewcommand{\arraystretch}{0.5}
    \caption{Results for TA+CBS-MLA and CBS-TA-MLA on the dense map with 10 agents/tasks. }
    \label{tab:tacbs_table}
    \centering
    \Huge
    \scalebox{0.32}{%
    \begin{tabular}{|c|c|r|r|}
    \hline  
        & Success Rate & Runtime (s) & Cost \\
        \hline 
        TA+CBS-MLA & \textbf{100/100} & \textbf{1.92} & 148.16\\
        \hline
        CBS-TA-MLA & 99/100 & 3.04 & \textbf{146.63}\\
    \hline 
    \end{tabular}
    }
\end{table}

\section{Conclusion and future work}

In this paper, we presented the CBS-TA-MLA algorithm to solve the MG-TAPF problem optimally. We presented three enhanced variants of CBS-TA-MLA, namely (1) optimal variant CBSH-TA-MLA, which speeds up CBS-TA-MLA by adding a heuristic, (2) bounded-suboptimal variant ECBS-TA-MLA, which speeds up CBS-TA-MLA by using focal search and (3) greedy variant TA+CBS-MLA, which commits to the most promising task assignment without exploring other assignments. We conducted experiments to evaluate these algorithms in different settings. 
It is future research to incorporate additional enhancements (such as disjoint splitting for the high-level search and incremental A* for the low-level search) into CBS-TA-MLA to improve its efficiency.

\section*{ACKNOWLEDGMENT}
The research at Simon Fraser University was supported by the Natural Sciences and Engineering Research Council of Canada under grant number RGPIN2020-06540. The research at the University of Southern California was supported by the National Science Foundation under grant numbers 1409987, 1724392, 1817189, 1837779, 1935712, and 2112533 as well as a gift from Amazon.

\bibliographystyle{IEEEtran}
\bibliography{IEEEfull,sample.bib}

\end{document}